\documentclass[9pt,conference]{IEEEtran}
\IEEEoverridecommandlockouts
\pdfoutput=1

\usepackage{color}
\usepackage{bm}
\usepackage{hyperref}
\usepackage{color}
\usepackage{amsthm}
\usepackage{times,amsmath}
\usepackage{amssymb}
\usepackage{subcaption}
\usepackage{caption}
\usepackage{cite}
\usepackage{url}
\usepackage[capitalise]{cleveref}
\usepackage{enumitem}
\usepackage{tikz, pgfplots}
\usepackage{algorithm,algpseudocode}
\usepackage{moresize} 
\captionsetup[figure]{font=small}
\captionsetup[table]{name=Table}


\input{mysymbol.sty}

\def \Hgcn {{\bbH_{\mathrm{GCN}}}}

\usetikzlibrary{shapes,arrows}
\pgfplotsset{compat=1.17}
\pgfplotstableset{col sep=comma}
\usepgfplotslibrary{groupplots}

\tikzset{every mark/.append style={scale=1.6, solid}, font=\small}
\pgfplotsset{
    width=1\textwidth,
    height=5.5cm,
    legend style={
        font=\small ,  
        inner xsep=1pt,
        inner ysep=1pt,
        nodes={inner sep=1pt}},
    legend cell align=left,
    every axis/.append style={line width=.5pt},
 	every axis plot/.append style={line width=1.5pt},
 	every axis y label/.append style={yshift=-4pt}
}

\begin{document}

\title{Redesigning graph filter-based GNNs to relax the homophily assumption
\thanks{This work was supported in part by the Spanish AEI Grants PID2022-136887NB-I00, PID2023-149457OB-I00, FPU20/05554, the Community of Madrid (Madrid ELLIS Unit), and the NSF under award CCF-2340481.
Research was sponsored by the Army Research Office and was accomplished under Grant Number W911NF-17-S-0002. The views and conclusions contained in this document are those of the authors and should not be interpreted as representing the official policies, either expressed or implied, of the Army Research Office or the U.S. Army or the U.S. Government. The U.S. Government is authorized to reproduce and distribute reprints for Government purposes notwithstanding any copyright notation herein.
        Emails:  
        \href{mailto:samuel.rey.escudero@urjc.es}{samuel.rey.escudero@urjc.es}, 
        \href{mailto:nav@rice.edu}{nav@rice.edu},
        \href{mailto:victor.tenorio@urjc.es}{victor.tenorio@urjc.es}, 
        \href{mailto:segarra@rice.edu}{segarra@rice.edu},
        \href{mailto:antonio.garcia.marques@urjc.es}{antonio.garcia.marques@urjc.es}  
        }
}

\author{
\IEEEauthorblockN{
Samuel Rey\IEEEauthorrefmark{1},
Madeline Navarro\IEEEauthorrefmark{2},
Victor M. Tenorio\IEEEauthorrefmark{1},
Santiago Segarra\IEEEauthorrefmark{2},
Antonio G. Marques\IEEEauthorrefmark{1},
} %
\IEEEauthorblockA{
\IEEEauthorrefmark{1}Dept. of Signal Theory and Communications, King Juan Carlos University, Madrid, Spain } %
\IEEEauthorblockA{
\IEEEauthorrefmark{2}Dept. of Electrical and Computer Engineering, Rice University, Houston, TX, USA } %
}

\maketitle
\begin{abstract}
Graph neural networks (GNNs) have become a workhorse approach for learning from data defined over irregular domains, typically by implicitly assuming that the data structure is represented by a homophilic graph. However, recent works have revealed that many relevant applications involve heterophilic data where the performance of GNNs can be notably compromised. To address this challenge, we present a simple yet effective architecture designed to mitigate the limitations of the homophily assumption. The proposed architecture reinterprets the role of graph filters in convolutional GNNs, resulting in a more general architecture while incorporating a stronger inductive bias than GNNs based on filter banks. The proposed convolutional layer enhances the expressive capacity of the architecture enabling it to learn from both homophilic and heterophilic data and preventing the issue of oversmoothing. From a theoretical standpoint, we show that the proposed architecture is permutation equivariant.
Finally, we show that the proposed GNNs compares favorably relative to several state-of-the-art baselines in both homophilic and heterophilic datasets, showcasing its promising potential.
\end{abstract}
\begin{IEEEkeywords}
Graph convolutional network, graph filter, heterophily
\end{IEEEkeywords}

\section{Introduction}\label{S:Introduction}

While the success of graph neural networks (GNNs) is undeniable~\cite{wu2021comprehensive,zhou2020graph}, their limitations remain active research topics, such as oversmoothing, lack of interpretability, and subpar performance on heterophilic data~\cite{bo2021lowfrequencyinformationgraph,oono2021graphneuralnetworks,zhang2024trustworthy,huang2023graphlime,tenorio2024recovering}.
Since many GNNs effectively perform low-pass filtering, they may lack the expressive power needed to learn from heterophilic data, despite its ubiquity in real-world scenarios~\cite{balcilar2020bridginggapspectral,yan2022twosidessame}.
Efforts to overcome this homophily assumption often involve proposing new and more complicated architectures~\cite{zheng2024graphneuralnetworks,bo2021lowfrequencyinformationgraph,yan2022twosidessame}.

Instead, we consider a simpler approach by aggregating node features beyond a single-hop neighborhood~\cite{atwood2016diffusionconvolutionalneuralnetworks,gama2020graphs,ma2021unifiedviewgraph,xu2018representation,tenorio2021robust}.
For standard architectures that combine features from adjacent nodes, stacking more layers yields interactions for further hops; however, the inherent low-pass filtering is prone to oversmoothing~\cite{oono2021graphneuralnetworks,nt2019revisitinggraphneural,bo2021lowfrequencyinformationgraph,yan2022twosidessame}.
A natural alternative is to apply polynomial filters, which aggregate information from different hops at every layer without being constrained to low-pass representations~\cite{bianchi2021graphneuralnetworks,gama2020graphs,atwood2016diffusionconvolutionalneuralnetworks}.
Polynomial-based approaches remain popular for spectral GNNs~\cite{yan2022twosidessame,wang2022howpowerful,zhang2024lowpassfilteringlargescale}, but performing operations in the frequency domain can require heavier computation and limited flexibility~\cite{wu2021comprehensive}.
Alternatively, many works consider polynomial filters for spatial GNNs~\cite{gama2019convolutionalgraphneural,ruiz2021graph,wu2019simplifying,bianchi2021graphneuralnetworks}, which require higher-order polynomials to aggregate neighborhoods at further hops.

Polynomial graph filters not only serve as potent and flexible operators for GNNs, but they are also well-established tools in graph signal processing (GSP)~\cite{isufi2024graph,dong2020graphsignalprocessing,gama2020graphs,ruiz2021graph,rey2023robust}.
Indeed, graph filter analysis has provided rich insights and techniques for graph convolutional networks (GCNs)~\cite{ruiz2021graph,isufi2024graph,dong2020graphsignalprocessing,rey2022untrained}, which are designed with GNN layers that aggregate multi-hop information~\cite{wu2019simplifying,gasteiger2019combining,eliasof2023improvinggraphneural,rey2024convolutional}.
For example, previous works apply GSP concepts to decouple learning for transforming features at each layer and propagating features on the graph~\cite{frasca2020sign,wu2019simplifying,chen2020scalable}.

In this work, we introduce Adaptive Aggregation GCN (AAGCN), an architecture based on polynomial graph convolutional filters where we decouple training for feature transformation and feature propagation.
More specifically, we alternate between learning (i) the weights responsible for feature transformation and (ii) a single filter coefficient that controls feature aggregation at each hop.
This approach results in an intuitive graph filter-based GNN, where the frequency analyses of the filters enhances interpretability.
Moreover, AAGCN serves as a tradeoff between the original single-hop GCN~\cite{kipf2016semi} and the more expressive variant of GCNs with learned filter banks for every hop~\cite{ruiz2021graph}.
Our contributions are summarized as follows.


\begin{itemize}[left= 5pt .. 15pt, noitemsep]
    \item[1)] We propose AAGCN, a filter-based architecture for learning from homophilic and heterophilic data that alleviates oversmoothing.
    \item[2)] We introduce a decoupled training algorithm that separately learns weights for feature aggregation and feature transformation.
    \item[3)] We not only demonstrate that our method is competitive with existing baselines for benchmark graph datasets, both homophilic and heterophilic, 
    but we also showcase the explainability of AAGCN through a frequency analysis of the learned filters.
    
\end{itemize}


\section{Preliminaries and problem statement}

\subsection{Graph signal processing}\label{Ss:gsp}

We consider graph-based learning on an undirected graph $\ccalG = (\ccalV, \ccalE)$, composed of the set of $N$ nodes $\ccalV$ and the set of edges $\ccalE \subseteq \ccalV \times \ccalV$, where an edge $(i,j) \in \ccalE$ exists if and only if the nodes $i$ and $j$ are connected.
Graphical operations such as convolutions can be conveniently computed using the adjacency matrix $\bbA \in \reals^{N \times N}$, which encodes the connectivity of $\ccalG$, that is, $A_{ij} = A_{ji} \neq 0$ if and only if $(i,j) \in \ccalE$.
Note that $\bbA$ is symmetric and thus diagonalizable, which we express as $\bbA = \bbV \bbLambda \bbV^\top$, where the orthogonal matrix $\bbV$ collects the eigenvectors and $\bbLambda = \diag(\bblambda)$ the eigenvalues $\bblambda\in\reals^N$ of $\bbA$.
The spectrum of $\bbA$ plays a central role in graph spectral analysis~\cite{sandryhaila2014discrete}. 
Additionally, we consider node features, represented as a vector $\bbx\in\reals^N$, where $x_i$ is the feature value on the $i$-th node.

To process data on the graph for predictive tasks, we consider graph filters, a prominent tool in GSP that performs linear operations on graph signals~\cite{segarra2017optimal}.
In particular, a graph filter can be expressed as a polynomial of the adjacency matrix of the form
\begin{equation}\label{eq:graph_filter}
    \bbH:=\sum_{r=0}^{R-1}h_r\bbA^r.
\end{equation}
Observe that $\bbA\bbx$ shifts the graph signal $\bbx$ to neighboring nodes, and more generally $\bbA^r\bbx$ denotes a diffusion of $\bbx$ within an $r$-hop radius.
Then, $h_r$ denotes the weight of a shift over an $r$-hop neighborhood.
Thus, $\bbH$ can model the diffusion of signals over the graph and generalizes the convolution operation.
Indeed, graph filters are the basis for spectral GNNs and generalize the aggregation operation of most convolutional GNNs~\cite{isufi2024graph,dong2020graphsignalprocessing}.

\subsection{GCNs for node classification}\label{Ss:nodeclass}

We consider the prominent task of node classification.
Given a dataset $\ccalT = \{ \bbX, \ccalY_{\mathrm{tr}} \}$ containing a matrix of $F$ node features $\bbX \in \reals^{N\times F}$ on the graph $\ccalG$ and observed node labels $\ccalY_{\mathrm{tr}} \subset \ccalY$, our goal is to learn a map relating $\bbX$ and $\ccalY_{\mathrm{tr}}$ such that we may predict unseen labels $\ccalY_{\mathrm{test}} = \ccalY\backslash \ccalY_\mathrm{tr}$ from node features $\bbX$.
In particular, we obtain the non-linear parametric function $f ( \cdot, \Theta | \ccalG ) : \reals^{N \times F} \rightarrow \ccalY$, whose parameters $\Theta$ we estimate by solving
\begin{equation}\label{eq:node_class}
    \min_{\Theta} \ccalL(\ccalY_{\mathrm{tr}}, f (\bbX, \Theta | \ccalG)).
\end{equation}
Here, $\ccalL$ is a loss function chosen to suit the task at hand, such as cross entropy loss for classification or mean square error for regression when $\ccalY = \reals$.
While we focus on node classification, note that our architecture can be readily employed for other graph learning tasks.

Myriad GNNs have been proposed for solving the problem in~\eqref{eq:node_class}, including the seminal GCN in~\cite{kipf2016semi}.
The output of the model is given by the recursion of $L$ layers, with the $\ell$-th layer taking the form
\begin{equation}\label{eq:kipf_gnn}
    \bbX^{(\ell+1)} = \sigma\left(\tbA \bbX^{(\ell)} \bbW^{(\ell)}\right),
\end{equation}
where $\bbW^{(\ell)}$ is a learnable weight matrix and $\tbA = \tbD^{-\frac{1}{2}} (\bbA + \bbI) \tbD^{-\frac{1}{2}}$ is the adjacency matrix with self-loops normalized by the degree matrix $\tbD = \diag((\bbA+\bbI)\bbone)$.
While simple, the GCN in~\eqref{eq:kipf_gnn} has shown remarkable success for homophilic datasets, where neighboring nodes tend to share labels.
The convolution in~\eqref{eq:kipf_gnn} is a normalized graph filter $\Hgcn = \bbA + \bbI$, which poses critical limitations.
First, the radius of the architecture cannot exceed its depth as each layer only aggregates features from single-hop neighborhoods.
Second, as $\bbH_{\mathrm{GCN}}$ is a \emph{low-pass filter}~\cite{balcilar2020bridginggapspectral,yan2022twosidessame,bo2021lowfrequencyinformationgraph}, stacking several layers to increase the radius leads to the infamous oversmoothing problem~\cite{oono2021graphneuralnetworks,nt2019revisitinggraphneural,bo2021lowfrequencyinformationgraph,yan2022twosidessame}, where node representations become too similar to be distinguished for predictions.
Indeed, the low-pass filtering of GCNs provides intuition on the challenges of learning from heterophilic data~\cite{bo2021lowfrequencyinformationgraph,yan2022twosidessame}.


A solution to this problem was introduced in~\cite{gama2019convolutionalgraphneural,ruiz2021graph} by replacing the single-hop graph filter in \eqref{eq:kipf_gnn} with a bank of polynomial graph filters.
Instead of learning only one weight matrix $\bbW^{(\ell)}$ per layer, the filter-bank GCN learns a matrix at every hop, greatly increasing expressivity.
Formally, the $\ell$-th layer of the filter-bank GCN is
\begin{equation}\label{eq:fb_gnn}
    \bbX^{(\ell+1)} = \sigma \left( \sum_{r=0}^{R-1} \bbA^r \bbX^{(\ell)} \bbW_r^{(\ell)} \right),
\end{equation}
where the learnable weight matrix $\bbW_r^{(\ell)}$ contains the coefficients of the filter bank associated with the $r$-hop neighborhood.
Combining features at multiple hops effectively decouples the radius and depth of the architecture, as each layer can aggregate features beyond adjacent neighbors without the need of additional layers.
Furthermore, the filter bank in~\eqref{eq:fb_gnn} permits learning transformations that are not low-pass and enjoy well-established theoretical guarantees~\cite{ruiz2021graph}.

While training weights for each hop allows GCNs to adapt to heterophilic or homophilic datasets, the increased expressivity can render the network prone to overfitting.
Learning an increased number of parameters requires large amounts of data, which can cause filter-bank GCNs to perform poorly on common homophilic datasets.
However, imposing assumptions on each $\bbW_r^{(\ell)}$ can reduce the model complexity while maintaining sufficient expressivity~\cite{gama2018mimo}.
In particular, a fundamental assumption of GCNs is that edges denote similarity between nodes, typically implying similar labels or features~\cite{zheng2024graphneuralnetworks}.
Thus, it is reasonable to expect feature importance to be similar for each node, that is, similar feature weights $\bbW_r^{(\ell)}$ and $\bbW_{r+1}^{(\ell)}$ for the $r$-th and $(r+1)$-th hop neighborhoods, respectively, although the importance of each hop may vary.
We next present a GCN framework that imposes this assumption structurally, posed as a tradeoff between the single-hop GCN in~\eqref{eq:kipf_gnn} and the filter-bank GCN in~\eqref{eq:fb_gnn}.

\section{Adaptive Aggregation GCN}
We proposed the Adaptive Aggregation GCN (AAGCN), a convolutional GNN based on learnable graph filters given by the recursion
\begin{equation}\label{eq:aagnn}
    \bbX^{(\ell+1)} = \sigma \left( \sum_{r=0}^{R-1} h_r^{(\ell)} \bbA^r \bbX^{(\ell)} \bbW^{(\ell)} \right),
\end{equation}
where $h_r^{(\ell)}$ denotes coefficients of the graph filter, and $\bbX^{(\ell)} \in \reals^{ N \times F_{\ell} }$ and $\bbX^{(\ell+1)}\in\reals^{N \times F_{\ell+1}}$ contain the node representations returned by layers $\ell-1$ and $\ell$, respectively.
Intuitively, AAGCNs decouple the parameters of the architecture into two groups: (i) the coefficients $\bbh^{(\ell)}\in\reals^R$ control how node features from different hops are diffused over the graph, while (ii) the weight matrix $\bbW^{(\ell)}\in\reals^{F_{\ell+1} \times F_{\ell} }$ transforms node features and is shared by each hop.

The proposed formulation resembles existing polynomial-based GCN architectures.
Earlier works consider matrix powers for richer convolutions, but these are either less informative or more restrictive than~\eqref{eq:aagnn}~\cite{wu2019simplifying,atwood2016diffusionconvolutionalneuralnetworks,xu2019graphconvolutionalnetworks,frasca2020sign}.
Some exploit graph theory to incorporate information at further hops~\cite{gasteiger2019combining,bianchi2021graphneuralnetworks,abuelhaija2020ngcn}, but these do not learn weights for each hop.
In contrast, ours is composed of simple operations, yielding a natural compromise between \eqref{eq:kipf_gnn} and \eqref{eq:fb_gnn}.
Similar to stacking GCN layers~\eqref{eq:kipf_gnn}, a single AAGCN layer combines all node features in an $r$-hop radius to learn feature importance weights $\bbW^{(\ell)}$.
However, as in~\eqref{eq:fb_gnn}, we can aggregate beyond a single hop, preventing only low-pass filtering operations.
Moreover, AAGCN layers require $R + F_{\ell+1}F_{\ell}$ parameters, reducing complexity compared to not only the filter-bank GCN with $RF_{\ell+1}F_{\ell}$ parameters per layer but also its existing simplifications~\cite{gama2018mimo}.
While far simpler, AAGCN parameters are more influential, as each hop weight $h_r^{(\ell)}$ imposes a stronger inductive bias, mitigating the likelihood of overfitting.
Indeed, our architecture may be interpreted as requiring that each filter apply the same treatment to node features at every hop.


Beyond expressivity, AAGCNs can adapt to heterophilic or homophilic datasets by learning different weights per hop.
Crucially, we achieve this flexibility without any specific regularization term or an architecture tailored to heterophilic data.
Instead, we rely on a simple yet potent approach that exploits GSP tools~\cite{balcilar2021analyzingexpressivepower,ma2022homophily}.
As the decoupled formulation in~\eqref{eq:aagnn} models feature transformations and the graph filter in~\eqref{eq:graph_filter} separately, we require more scrupulous minimization of~\eqref{eq:node_class} to learn both sets of parameters $\Theta = (\ccalH,\ccalW)$.

\subsection{Training AAGCNs} \label{S:training}
The concept of decoupling for GNNs is not novel, but training the entirety of a decoupled model is uncommon~\cite{zhang2024lowpassfilteringlargescale}.
For example, authors in~\cite{gasteiger2019combining} approximate PageRank to diffuse learned node embeddings over the graph, and~\cite{wu2019simplifying,frasca2020sign,abuelhaija2020ngcn} precompute higher-order polynomial operations.
However, while these works indeed separate node feature diffusion from feature transformation as does~\eqref{eq:aagnn}, only the feature transformation weights are learned during training.
Differently, the AAGCN learns two sets of parameters, the filter coefficients $\ccalH = \{\bbh^{(\ell)}\}_{\ell=1}^L$ and the feature weights $\ccalW = \{\bbW^{(\ell)}\}_{\ell=1}^L$.
To avoid computing the gradient of a bilinear term, which may lead to instabilities, we train our architecture by minimizing \eqref{eq:node_class} following an alternating minimization scheme.
More precisely, denote the AAGCN as the parametric function $f (\bbX, \ccalH,\ccalW | \bbA)$.
Then, we perform the steps enumerated in Algorithm~\ref{alg:aagcn}.

\begin{algorithm}[t]
\caption{AAGCN}
\label{alg:aagcn}
\begin{algorithmic}[1]
\Require Step size $\lambda > 0$, iterations $I_{\ccalH}$, $I_{\ccalW}$
\State Initialize $\ccalH$ and $\ccalW$
\While{not terminated}
    \For{$i=1,\dots, I_{\ccalH}$}
        \State $\bbh^{(\ell)} \leftarrow \bbh^{(\ell)} - \lambda \nabla_{\bbh^{(\ell)}} \ccalL(\bbY, f(\bbX,\ccalH,\ccalW | \bbA))$
        \Statex \qquad\qquad $\forall~\ell=1,\dots,L$
    \EndFor
    \For{$i=1,\dots, I_{\ccalW}$}
        \State $\bbW^{(\ell)} \leftarrow \bbW^{(\ell)} - \lambda \nabla_{\bbW^{(\ell)}} \ccalL(\bbY, f(\bbX,\ccalH,\ccalW | \bbA))$
        \Statex \qquad\qquad $\forall~\ell=1,\dots,L$
    \EndFor
\EndWhile
\State \Return $f(\cdot, \ccalH, \ccalW | \bbA)$
\end{algorithmic}
\end{algorithm}

The stochastic gradient descent sub-algorithms from lines 3 to 8 of Algorithm~\ref{alg:aagcn} are repeated for a maximum number of $K$ epochs or until some stopping criterion is met.
If exact minimization of $\ccalL$ with respect to $\ccalH$ or $\ccalW$ is computationally prohibitive, we may choose small numbers of gradient descent iterations $I_{\ccalH}$ and $I_{\ccalW}$ for approximate solutions at each epoch.
In the numerical evaluation, we demonstrate that moderately small $I_{\ccalH}$ and $I_{\ccalW}$ achieve competitive performance despite inexact alternating minimization.

By optimizing $\ccalL$ one parameter set at a time, we split the bilinear terms in~\eqref{eq:aagnn}, resulting in computationally friendly gradients.
The number of operations per iteration of AAGCN is similar to that of the single-hop GCN~\eqref{eq:kipf_gnn}.
To see this, note that the matrix product $\bbA^R\bbX^{(\ell)} = \bbA \cdots \bbA\bbA\bbX^{(\ell)}$ can be seen as recursive multiplication by $\bbA$.
While standard matrix multiplication would yield a computational complexity of $\ccalO(RN^2F_{\ell} + NF_{\ell}F_{\ell+1})$, exploiting the sparsity of $\bbA$ can reduce computation to $\ccalO(RSF_{\ell} + NF_{\ell}F_{\ell+1})$, where $S$ represents the number of non-zero entries in $\bbA$.

\subsection{AAGCN Analysis}
The crux of AAGCNs is the graph filter, a fundamental tool in GSP that permits straightforward analyses.
First, we note that the AAGCN is clearly permutation equivariant, as we formally state next.
\begin{proposition}
    The AAGCN $f(\bbX, \ccalH, \ccalW | \bbA)$ in~\eqref{eq:aagnn} is permutation equivariant, that is, for any permutation matrix $\bbP$ it holds that
    \begin{equation}
        f(\bbP\bbX, \ccalH, \ccalW | \bbP\bbA\bbP^\top) = \bbP f (\bbX, \ccalH, \ccalW | \bbA).
    \end{equation}
\end{proposition}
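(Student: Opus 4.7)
The plan is to proceed by induction on the layer index $\ell$, showing that if the input to the $\ell$-th layer is permuted by $\bbP$ (with the shift operator accordingly conjugated by $\bbP$), then the output of that layer is also permuted by $\bbP$. Propagating this property through all $L$ layers will yield the claim.

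For the inductive step, the key algebraic fact I would invoke is that $\bbP$ is orthogonal, so $\bbP^\top\bbP = \bbI$, which gives the telescoping identity $(\bbP\bbA\bbP^\top)^r = \bbP\bbA^r\bbP^\top$ for every $r \geq 0$. Using this, for the $\ell$-th AAGCN layer with permuted input $\bbP\bbX^{(\ell)}$ and permuted shift $\bbP\bbA\bbP^\top$, each term inside the sum becomes
\begin{equation*}
h_r^{(\ell)}(\bbP\bbA\bbP^\top)^r(\bbP\bbX^{(\ell)})\bbW^{(\ell)} = h_r^{(\ell)}\bbP\bbA^r\bbX^{(\ell)}\bbW^{(\ell)},
\end{equation*}
since $\bbP^\top\bbP = \bbI$ cancels in the middle. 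Factoring $\bbP$ out of the sum and using that the elementwise nonlinearity $\sigma$ acts on a row-permuted matrix as $\sigma(\bbP\bbM) = \bbP\sigma(\bbM)$ yields $\bbP\bbX^{(\ell+1)}$, closing the induction.

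Concretely, the ordered steps I would carry out are: (i) verify the polynomial identity $(\bbP\bbA\bbP^\top)^r = \bbP\bbA^r\bbP^\top$ by induction on $r$ (trivial for $r=0,1$ and one line for the inductive step); (ii) establish the base case $\ell=0$ using the assumption that the input features are permuted as $\bbP\bbX$; (iii) perform the inductive step by substituting into the layer recursion in~\eqref{eq:aagnn}, using the cancellation $\bbP^\top\bbP = \bbI$, factoring $\bbP$ on the left of the sum, and commuting it through $\sigma$; (iv) conclude that after $L$ layers $f(\bbP\bbX, \ccalH, \ccalW | \bbP\bbA\bbP^\top) = \bbP f(\bbX, \ccalH, \ccalW | \bbA)$.

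No step presents a genuine obstacle; the argument is essentially bookkeeping built on the orthogonality of $\bbP$ and the elementwise action of $\sigma$. The only subtlety worth flagging is that neither the filter coefficients $\ccalH$ nor the feature weights $\ccalW$ depend on node ordering (they multiply from the right on the feature dimension), which is exactly why the proof goes through without any modification of the learned parameters.
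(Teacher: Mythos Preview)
Your proposal is correct and follows essentially the same argument as the paper: both rely on the orthogonality of $\bbP$ to collapse $(\bbP\bbA\bbP^\top)^r$ to $\bbP\bbA^r\bbP^\top$, cancel $\bbP^\top\bbP$, and then commute $\bbP$ through the entrywise nonlinearity $\sigma$, propagating the result layer by layer. Your version is simply more explicit about the induction on $r$ and on $\ell$, whereas the paper compresses these steps into a single displayed identity.
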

\begin{proof}
    Given any permutation matrix $\bbP$ and the recursion of the AAGCN at layer $\ell$, we have that 
    \begin{equation}
        \bbP \sum_{r=0}^{R-1} h_r^{(\ell)} \bbA^r \bbP^\top  \bbP \bbX^{(\ell)} \bbW^{(\ell)}= \bbP \sum_{r=0}^{R-1} h_r^{(\ell)} \bbA^r \bbX^{(\ell)} \bbW^{(\ell)}.
    \end{equation}
    Since the non-linearity $\sigma$ is computed entry-wise, it follows that 
    \begin{equation}
        \sigma \left( \bbP \sum_{r=0}^{R-1} h_r^{(\ell)} \bbA^r \bbX^{(\ell)} \bbW^{(\ell)} \right) = \bbP \bbX^{(\ell+1)},
    \end{equation}
    and the result then follows for every layer of $f(\bbX,\ccalH,\ccalW | \bbA)$.
\end{proof}

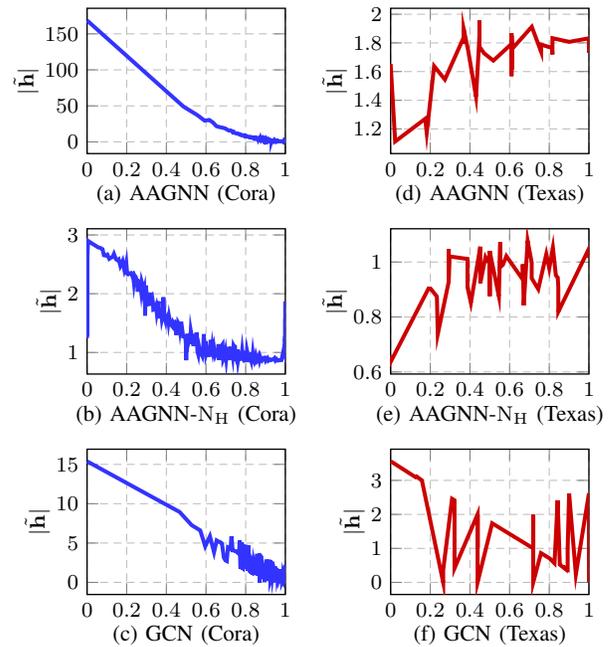
\begin{figure}[]
	\centering
		\centering
		\begin{tikzpicture}[baseline,scale=1]

\pgfplotstableread{data/20240910-1345-CoraGraphDataset-filter_freq.csv}\coraerrtable
\pgfplotstableread{data/20240910-1345-TexasDataset-filter_freq.csv}\texaserrtable

\begin{groupplot}[
    group style={group size=2 by 3,
        horizontal sep=1.4cm,
        vertical sep=1cm,},
    xlabel style={yshift=1mm},
    ylabel={$|\tbh|$},
    grid style=densely dashed,
    grid=both,
    width=120,
    height=100,
    ]    

    \nextgroupplot[xlabel={(a) AAGNN (Cora)}, xmin=0, xmax=1]
    \addplot[white!20!blue, solid] table [x=AFGNN-xax, y=AFGNN-filterfreqs] {\coraerrtable};

    \nextgroupplot[xlabel={(d) AAGNN (Texas)}, xmin=0, xmax=1]
     \addplot[black!20!red] table [x=AFGNN-xax, y=AFGNN-filterfreqs] {\texaserrtable};
     
    \nextgroupplot[xlabel={(b) AAGNN-$\mathrm{N_H}$ (Cora)}, xmin=0, xmax=1]
    \addplot[white!20!blue, solid] table [x=AFGNN-normH-xax, y=AFGNN-normH-filterfreqs] {\coraerrtable};

    \nextgroupplot[xlabel={(e) AAGNN-$\mathrm{N_H}$ (Texas)}, xmin=0, xmax=1]
     \addplot[black!20!red] table [x=AFGNN-normH-xax, y=AFGNN-normH-filterfreqs] {\texaserrtable};

     \nextgroupplot[xlabel={(c) GCN (Cora)}, xmin=0, xmax=1]
    \addplot[white!20!blue, solid] table [x=Kipf-xax, y=Kipf-filterfreqs] {\coraerrtable};

    \nextgroupplot[xlabel={(f) GCN (Texas)}, xmin=0, xmax=1]
     \addplot[black!20!red] table [x=Kipf-xax, y=Kipf-filterfreqs] {\texaserrtable};

\end{groupplot}
\end{tikzpicture}
    \vspace{-1mm}
	\caption{Graph filter frequency response for AAGCNs versus single-hop GCNs in~\eqref{eq:kipf_gnn} when learning from a homophilic (Cora) or heterophilic (Texas) dataset. The frequencies are ordered as in~\cite{sandryhaila2014discrete}.
    }
    \label{fig:freq_resps}
    \vspace{-3mm}
\end{figure}

\begin{figure*}[!t]
    \centering
    \hspace{-.2cm}
    \begin{minipage}[c]{.6\textwidth}
        \scriptsize
        \centering
        \begin{tabular}{l@{\hspace{2pt}}|c|c|c|c|c}
         & Texas (0.11) & Wisconsin (0.2) & Cornell (0.13) & Cora (0.81) & Citeseer (0.74) \\
         \hline
        AAGCN         & 0.838 ± 0.055 & \textbf{0.855 ± 0.035} & \textbf{0.776 ± 0.068} & 0.727 ± 0.026   & 0.610 ± 0.036  \\
        AAGCN-$\mathrm{N_A}$   & \textbf{0.855 ± 0.051} & \textbf{0.853 ± 0.042} & 0.762 ± 0.034 & \textbf{0.805 ± 0.004}   & 0.697 ± 0.007  \\
        AAGCN-$\mathrm{N_H}$   & \textbf{0.857 ± 0.051} & 0.847 ± 0.034 & \textbf{0.773 ± 0.044} & 0.792 ± 0.007   & {0.699 ± 0.006}  \\
        \hline
        FB-GCN         & 0.757 ± 0.048 & 0.761 ± 0.039 & 0.743 ± 0.044 & 0.668 ± 0.063   & 0.566 ± 0.023  \\
        FB-GCN-$\mathrm{N_A}$    & 0.797 ± 0.022 & 0.845 ± 0.030 & 0.751 ± 0.043 & 0.804 ± 0.003   & {0.698 ± 0.010}  \\
        \hline
        GCN          & 0.576 ± 0.048 & 0.445 ± 0.083 & 0.389 ± 0.095 & 0.753 ± 0.015   & 0.640 ± 0.009  \\
        GCN-$\mathrm{N_A}$  & 0.592 ± 0.055 & 0.533 ± 0.063 & 0.411 ± 0.051 & \textbf{0.808 ± 0.006}   & \textbf{0.707 ± 0.006}  \\
        MLP           & 0.814 ± 0.057 & 0.839 ± 0.036 & 0.754 ± 0.056 & 0.421 ± 0.056   & 0.453 ± 0.035  \\
        GAT           & 0.643 ± 0.063 & 0.641 ± 0.089 & 0.549 ± 0.047 & 0.744 ± 0.030   & 0.641 ± 0.014  \\
        SAGE     & 0.784 ± 0.042 & 0.765 ± 0.044 & 0.646 ± 0.063 & 0.800 ± 0.010   & \textbf{0.700 ± 0.011}  \\
        GIN           & 0.649 ± 0.055 & 0.516 ± 0.082 & 0.446 ± 0.085 & 0.689 ± 0.028   & 0.527 ± 0.043  \\
        \hline
        \end{tabular}
        \captionof{table}{Mean accuracy and standard deviation when solving a node classification problem for heterophilic and homophilic datasets. The two best performing architectures for each dataset are highlighted in \textbf{boldface}.}
        \label{tab:accuracy_metrics}
    \end{minipage}
    \hfill
    \begin{minipage}[c]{.39\textwidth}
        \centering
        \begin{tikzpicture}[baseline,scale=.55]

\pgfplotsset{colormap={CM}{
    rgb255=(54, 75, 154)
    rgb255=(74, 123, 183)
    rgb255=(110, 166, 205)
    rgb255=(152, 202, 225)
    rgb255=(253, 179, 102)
    rgb255=(246, 126, 75)
    rgb255=(221, 61, 45)
    rgb255=(165, 0, 38)
}}

\begin{groupplot}[
    name=ablation_study,
    table/col sep=space,
    width=5.5cm,
    height=5.5cm,
    group style={group size=2 by 1,
        horizontal sep=1.6cm,
        vertical sep=1.8cm,},
    enlargelimits=false,
    colorbar horizontal,
    xtick={0,1,2,3,4},
    xticklabels={1,5,10,25,50},
    ytick={0,1,2,3,4},
    yticklabels={1,5,10,25,50},
    axis on top,
    label style={font=\LARGE},
    tick style={draw=none},
    tick label style={font=\Large},
    title style={font=\huge}
    ]

    \pgfplotstableread{data/ablation/Ablation_CornellDataset-A-GCN.csv}\matrixA
    \pgfplotstableread{data/ablation/Ablation_CornellDataset-H-GCN-normH.csv}\matrixB

\nextgroupplot[xlabel={No. epochs $I_{\ccalW}$},ylabel={No. epochs $I_{\ccalH}$},title={(a) AAGNN},xmin=-0.5,xmax=4.5,ymin=-0.5,ymax=4.5]
    \addplot [
        matrix plot*,
        colorbar,
        nodes near coords,
        every node near coord/.append style={
            font=\footnotesize,
            /pgf/number format/fixed,        
            /pgf/number format/precision=2,  
            text=white,
            anchor=center
        },
        point meta=explicit,
        mesh/cols=5,  
    ] table [meta=value] {\matrixA};

\nextgroupplot[xlabel={No. epochs $I_{\ccalW}$},ylabel={No. epochs $I_{\ccalH}$},title={(b) AAGNN-$\mathrm{N_H}$},xmin=-0.5,xmax=4.5,ymin=-0.5,ymax=4.5]
    \addplot [
        matrix plot*,
        colorbar,
        nodes near coords,
        every node near coord/.append style={
            font=\footnotesize,
            /pgf/number format/fixed,        
            /pgf/number format/precision=2,  
            text=white,
            anchor=center,
        },
        point meta=explicit,
        mesh/cols=5,  
    ] table [meta=value] {\matrixB};

\end{groupplot}
\end{tikzpicture}
        \caption{Ablation study comparing performance.}
        \label{fig:ablation}
        \vspace{-.1cm}
    \end{minipage}
    \vspace{-.4cm}
\end{figure*}

Beyond simplification, our decoupling approach yields further advantages.
By learning the filter coefficients $\bbh^{(\ell)}$, we can analyze the type of graph filter learned by the architecture, a common task in GSP~\cite{sandryhaila2014discrete,rey2023robust}.
The frequency response of a graph filter is given by $\tbh = \bbPsi \bbh$, where $\bbPsi \in \reals^{N \times R}$ is the Vandermonde matrix with entries $\Psi_{ij} = \lambda_i^j$, with $\lambda_i$ being the $i$-th eigenvalue of $\bbA$~\cite{segarra2017optimal}.
If $\tbh$ is low for higher frequencies, we obtain a low-pass graph filter, assumed by most GCN architectures~\cite{kipf2016semi,oono2021graphneuralnetworks,nt2019revisitinggraphneural}, but heterophilic datasets often require high-frequency information for effective predictions~\cite{bo2021lowfrequencyinformationgraph,yan2022twosidessame}.

We compare the frequency responses of filters learned via AAGCN versus the single-hop GCN with fixed filter $\Hgcn$ (c.f. ~\eqref{eq:kipf_gnn}~\cite{kipf2016semi}) using the datasets Cora and Texas~\cite{mccallum2000automating,craven1998learning}.
As expected, Figs.~\ref{fig:freq_resps}a to c show that for both AAGCN and the traditional GCN, most of the energy concentrates at lower frequencies, depicting a \textit{low-pass} graph filter for the homophilic Cora.
In contrast, the heterophilic Texas dataset causes the energy to concentrate in high-frequency eigenvalues for AAGCN in Figs~\ref{fig:freq_resps}d and e, that is, a \textit{high-pass} filter.
However, in \cref{fig:freq_resps}f the GCN in~\eqref{eq:kipf_gnn} returns a low-pass filter, even for the heterophilic dataset.
Thus, we showed that AAGCN is adaptable to homophilic or heterophilic data and we can interpret the learned frequency response thanks to decoupling $\ccalH$ from $\ccalW$.
This behavior is most evident in Figs.~\ref{fig:freq_resps}b and e, which apply a variant of AAGCN described next.

\subsection{Normalizing the AAGCN}\label{sec:aagnn_var}

The proposed formulation in~\eqref{eq:aagnn} is sufficient for graph data where node degrees do not vary greatly.
However, in practice graphs often exhibit exponential degree distributions, which may cause preferential learning for only high-degree nodes.
We thus propose normalized variants of AAGCN to promote equitable learning from all nodes. 


First, we may replace the adjacency matrix $\bbA$ with its degree-normalized counterpart $\bar{\bbA} = \bbD^{-\frac{1}{2}} \bbA \bbD^{-\frac{1}{2}}$, where $\bbD$ is a diagonal matrix collecting the node degrees.
Normalizing the adjacency matrix is a common and straightforward approach, used in~\eqref{eq:kipf_gnn}. 
We refer to this as AAGCN-$\mathrm{N_A}$ for AAGCN with normalized adjacency matrix $\bar{\bbA}$.
The variant AAGCN-$\mathrm{N_A}$ indeed balances the connections for each node by degree; however, we may apply a stronger version to ensure that nodes receive balanced treatment from the resultant graph filter.


The recursion in~\eqref{eq:aagnn} can be interpreted as a variant of~\eqref{eq:kipf_gnn} where we replace $\bbA+\bbI$ with the filter $\bbH^{(\ell)} = \sum_{r=0}^{R-1} h_r^{(\ell)} \bbA^r$.
From this viewpoint, instead of normalizing the adjacency matrix $\bbA$, we can normalize the learned graph filter as $\bar{\bbH}^{(\ell)} = (\bbD_\bbH^{(\ell)})^{-\frac{1}{2}}\bbH^{(\ell)}(\bbD_\bbH^{(\ell)})^{-\frac{1}{2}}$ with $\bbD_\bbH^{(\ell)} := \diag(\bbH^{(\ell)}\bbone)$.
We let this variant be denoted AAGCN-$\mathrm{N_H}$ as we normalize the graph filter $\bbH^{(\ell)}$ for each layer $\ell = 1,\dots, L$.
These alternative architectures AAGCN-$\mathrm{N_A}$ and AAGCN-$\mathrm{N_H}$ can help in different settings, which we demonstrate in the next section.

\section{Numerical experiments}\label{S:exps}

We now assess the performance of the proposed AAGCN architecture on both homophilic and heterophilic datasets to highlight both its versatility and efficiency in diverse scenarios.
The code with our proposed AAGCN and our experiments is available on GitHub\footnote{\url{https://github.com/reysam93/adaptive_agg_gcn}}.

\vspace{0.2mm}
\noindent
\textbf{Datasets}.
We compare several architectures for node classification over 5 different datasets.
Texas, Cornell, and Wisconsin are \textit{heterophilic}.
They contain subgraphs from the WebKB graph dataset~\cite{craven1998learning}, where nodes represent webpages for members of university computer science departments, edges represent hyperlinks between them, and node labels represent the role of each member in the university. 
Cora and Citeseer are \textit{homophilic} graph datasets, specifically citation networks~\cite{mccallum2000automating}, where nodes represent scientific articles, edges represent citations among them, and node labels encode the topic of the article.
The homophily score~\cite{lim2021large} of each of the dataset is shown in the first row of Table~\ref{tab:accuracy_metrics}, where a value closer to 1 implies higher homophily.

\vspace{2mm}
\noindent
\textbf{Baselines}.
We compare the performance of the proposed architectures against several widely used baselines in the literature, including GCN~\cite{kipf2016semi} [see \eqref{eq:kipf_gnn}], FB-GCN~\cite{ruiz2021graph} [see \eqref{eq:fb_gnn}], GAT~\cite{velivckovic2018graph}, graph SAGE~\cite{hamilton2017inductive}, GIN~\cite{xu2018powerful}, and a graph-agnostic MLP.
Our proposed architecture is labeled as ``AAGCN'' [see~\eqref{eq:aagnn}], where ``AAGCN-$\mathrm{N_A}$'' represents the architecture in equation~\eqref{eq:aagnn} with the degree-normalized adjacency matrix $\bar{\bbA}^{(\ell)}$ and ``AAGCN-$\mathrm{N_H}$''  the architecture with normalized graph filters $\bar{\bbH}^{(\ell)}$.

\vspace{2mm}
\noindent
\textbf{Main results}.
The performances of the proposed architectures and the baselines are summarized in Table~\ref{tab:accuracy_metrics}.
First, we see that all our AAGCN variants outperform all alternatives on the heterophilic datasets.
This corroborates our claim that by using a polynomial graph filter the AAGCN is not restricted to learning smooth relations, which is critical when learning from heterophilic data.
Second, on homophilic datasets, our architectures achieve performance comparable to that of the baselines, underscoring the versatility of AAGCN implementations across different types of data.
Finally, compared to the architecture in equation~\eqref{eq:fb_gnn} (both ``FB-GCN'' and its counterpart using the normalized adjacency ``FB-GCN-$\mathrm{N_A}$''), our proposed methods achieve a greater accuracy.
Thus, our AAGCN architecture is indeed flexible enough to compete with the highly expressive filter-bank GCNs by imposing a stronger inductive bias while requiring far fewer parameters.

\vspace{2mm}
\noindent
\textbf{Ablation study}.
As mentioned in~\cref{S:training} and depicted in Algorithm~\ref{alg:aagcn}, we train our architecture in an iterative manner to avoid the instabilities resulting from the bilinear term in the learnable parameters of the architecture.
This experiment empirically verifies the validity of this claim by comparing the number gradient descent epochs $I_{\ccalH}$ and $I_{\ccalW}$ used to update the filter coefficients $\ccalH$ and feature weights $\ccalW$, respectively.
The results are shown in \cref{fig:ablation}, where each value denotes the difference in accuracy between the non-iterative version of the algorithm (training all parameters at once) and the iterative version presented in~\cref{alg:aagcn}.
Clearly, training the AAGCN following an approximate alternating optimization scheme yields an advantage over the classical training paradigm. 
Moreover, we observe that increasing both numbers of epochs $I_{\ccalH}$ and $I_{\ccalW}$ in the inner loops related to $\ccalH$ and $\ccalW$, respectively, is beneficial for the two architectures analyzed.
Note that a larger $I_{\ccalH}$ and $I_{\ccalW}$ amounts to minimizing $\ccalL$ with respect to each variable with a better accuracy, but we only require a small number of epochs to achieve competitive and even superior performance to the non-iterative version.

\section{Conclusion}\label{S:conc}
We introduced AAGCN, a simple yet powerful GNN that reinterprets the role of graph filters in GCNs.
Compared to existing convolutional architectures, AAGCN strikes a balance between simpler models that perform low-pass filtering at every layer and more complex ones that rely on banks of graph filters, which require a large number of parameters and may lead to overfitting. 
Key to the success of AAGCN, we introduced a novel training paradigm based on alternating optimization.
This approach decouples the estimation of the filter coefficients from the remaining learnable parameters, resulting in a more stable computation of the gradient updates.
Furthermore, we showed that the AAGCN is a permutation-equivariant architecture.
By analysis of learned filter frequency responses, we empirically showed that AAGCN can learn both low-pass and high-pass filters, depending on whether the data is homophilic or heterophilic.
Finally, we validated the performance of the proposed architecture and the impact of the proposed training scheme across multiple datasets.
In future work, we will expand on the interpretability of this simplified architecture, continuing the frequency response analysis.
In addition, we will consider other interesting properties such as the stability and transferability of AAGCNs~\cite{ruiz2021graph}.

\newpage

\bibliographystyle{IEEEbib}
\bibliography{myIEEEabrv,biblio}

\end{document}